\documentclass[a4paper, 10pt]{cas-sc}
\usepackage[numbers,sort]{natbib}
\def\tsc#1{\csdef{#1}{\textsc{\lowercase{#1}}\xspace}}
\tsc{WGM}
\tsc{QE}
\tsc{EP}
\tsc{PMS}
\tsc{BEC}
\tsc{DE}

\ExplSyntaxOn
\keys_set:nn { stm / mktitle } { nologo }
\ExplSyntaxOff

\usepackage[australian]{babel}

\usepackage[utf8]{inputenc}
\usepackage{afterpage}
\usepackage{blkarray}
\usepackage{bigstrut}
\usepackage{booktabs}
\usepackage{amsfonts}
\usepackage{amsmath}
\usepackage{amssymb}
\usepackage{amsthm}
\usepackage{array}
\usepackage{enumitem}
\usepackage{etoolbox}
\usepackage{hhline}
\usepackage{mathtools}
\usepackage{placeins}
\usepackage{subfigure}
\usepackage{tabularx}
\usepackage{verbatim}
\usepackage[dvipsnames]{xcolor}

\newcolumntype{R}[1]{>{\raggedleft\let\newline\\\arraybackslash\hspace{0pt}}m{#1}}

\theoremstyle{definition}
\newtheorem{theorem}{Theorem}

\newtheorem{algo}{Algorithm}

\newtheorem{proposition}[theorem]{Proposition}

\begin{document}

\author[a,b]{Marek Gagolewski}[orcid=0000-0003-0637-6028]
\ead{marek.gagolewski@pw.edu.pl}
\ead[url]{https://www.gagolewski.com/}
\shortauthors{M. Gagolewski}

\address[a]{Warsaw~University of Technology, Faculty of Mathematics and Information Science, ul. Koszykowa 75, 00-662 Warsaw, Poland}

\address[b]{Systems Research Institute, Polish Academy of Sciences,
ul. Newelska 6, 01-447 Warsaw, Poland}

\let\WriteBookmarks\relax
\def\floatpagepagefraction{1}
\def\textpagefraction{.001}

\shorttitle{Lumbermark: Resistant Clustering by Chopping Up Mutual Reachability Minimum Spanning Trees}
\title[mode=title]{Lumbermark: Resistant Clustering by Chopping Up \newline  Mutual Reachability Minimum Spanning Trees}

\begin{abstract}
We introduce Lumbermark, a robust divisive clustering algorithm capable of detecting clusters of varying sizes, densities, and shapes.  Lumbermark iteratively chops off large limbs connected by protruding segments of a dataset's mutual reachability minimum spanning tree.  The use of mutual reachability distances smoothens the data distribution and decreases the influence of low-density objects, such as noise points between clusters or outliers at their peripheries.  The algorithm can be viewed as an alternative to HDBSCAN that produces partitions with user-specified sizes.  A fast, easy-to-use implementation of the new method is available in the open-source \textbf{lumbermark} package for Python and R.  We show that Lumbermark performs well on benchmark data and hope it will prove useful to data scientists and practitioners across different fields.
\end{abstract}

\begin{keywords}
clustering\sep
outliers\sep
mutual reachability distance\sep
HDBSCAN\sep
minimum spanning trees
\end{keywords}

\maketitle

\section{Introduction}\label{sec:intro}

Clustering is an unsupervised learning technique that aims to detect
high-density subregions within a dataset's distribution
\cite{WierzchonKlopotek:clusterspringer,BlumHopcroftKannan2020:foundds,JaegerBanks2023:clustanalrev}.
It allows data scientists and decision makers to come up with valuable insights,
decrease the information overload, or build more tailored predictive models.
Successful applications of clustering are innumerable, e.g., in
software testing \cite{AJORLOO2024111805}, %
geology \cite{FU2022108686}, %
manufacturing \cite{Dogan2021}, %
or traffic management \cite{Wei2024}. %

Creators of clustering algorithms face many challenges for the problem at hand is, overall, underdetermined \citep{Hennig2015:trueclust,VanMechelenETAL2023:whitepaperclustbench}.
Each development can be viewed from multiple angles.
\begin{enumerate}[label=\alph*)]
\setlength\itemsep{0em}
\item Does an algorithm output a known in advance number of clusters $k$
for any $k$? Or does it ``guess'' the number of clusters based on the
underlying parameters (like \textit{min\_cluster\_size} in DBSCAN) and the local
density of points?
\item Is it hierarchical or at least quasi-hierarchical? In other words, does it yield partitions
that are properly nested (like in agglomerative or divisive approaches) or does the output drastically change with $k$ (like in $k$-means)?
\item Does it scale well with the increased sample size if data are sampled from the same mixture distribution? Do the parameters have to be re-adjusted then?
\item Does it allow clusters of different shapes, especially if they are well-separated and sitting on different lower-dimensional submanifolds (compare, e.g., spectral clustering), or do they have to be centred around some pivot (like in $k$-means or $k$-medoids)?
\item Does it allow clusters of different sizes? What is the highest and lowest level
    of cluster size inequality that it accepts? What is the smallest allowable cluster?
\item Does it allow clusters of different densities (which can be related to varying spread/variance or different subsample sizes)?
\item Is it robust in the presence of outliers and noise points? Does it tend to
    treat outliers as singleton clusters (like single linkage)?
    What level of noise is it able to ignore? What if the amount of noise is different in each cluster?
\item Does it handle overlapping clusters? Is the output partition crisp or soft (e.g., the EM algorithm for Gaussian mixtures allows separating multidimensional normal distributions and produces a fuzzy partition)?
\item Does it allow assigning new points to detected clusters, or does it have to be done using a separate classification algorithm?
\end{enumerate}

Many standalone clustering algorithms are proposed in the literature each year,
e.g., \cite{GagolewskiETAL2025:cvimst,YANG2026114147,ZHAO2025113880,MA2021194,GagolewskiETAL2016:genie,itm},
amongst which HDBSCAN \cite{hdbscan} stands out as a particularly successful
recent development, enjoying sound theoretical foundations \cite{fasthdbscan}
and providing solid implementations for Python or R, which are widely known to many
practitioners.  HDBSCAN operates on a dataset's mutual reachability minimum
spanning tree (MST), which has the nice property that it pulls peripheral
points away from each other. However, even though it is based
on a robust version of single linkage, the algorithm does not allow specifying
the output cluster count.

In \cite{GagolewskiETAL2025:cvimst}, a large number of $k$-clustering approaches
based on Euclidean minimum spanning trees has recently been reviewed.
In the current contribution, we extend the framework to the case
of mutual reachability MSTs, by first pointing out and then resolving their ambiguity (Sec.~\ref{sec:def}).
This allows us to introduce a new algorithm named Lumbermark (Sec.~\ref{sec:lumbermark}),
which outperforms the previous winner of the ranking, Genie \cite{GagolewskiETAL2016:genie} (Sec.~\ref{sec:experiments}).

\label{sec:def}
\section{Definitions and properties}

Let $\mathbf{X}\in\mathbb{R}^{n\times d}$ define $n$ points described
by $d$ real-valued features,
$\mathbf{x}_1,\dots,\mathbf{x}_n\in \mathbb{R}^d$,
and let $\mathcal{X}=\{1,\dots,n\}$ be comprised of their corresponding indexes.
By $d(i,j)=\| \mathbf{x}_i - \mathbf{x}_j \|$ we denote the Euclidean distance
between two given points. We assume that there are no two point pairs
of the same Euclidean distance (otherwise, a tiny bit of noise can be added
to $\mathbf{X}$ to make the distances unambiguous).

\subsection{DBSCAN*}

In \cite{hdbscan_lnai,hdbscan}, Campello et al.~introduced a simplified
version of the famous DBSCAN algorithm (Density-Based Spatial Clustering
of Applications with Noise \cite{dbscan}) named DBSCAN*. Let us recall its workings.

For a fixed \emph{radius} $\varepsilon>0$ and a \emph{smoothing parameter} $M\ge 1$,
let $N_\varepsilon(i)=\{ j\neq i: d(i, j)\le\varepsilon \}$ be the
$\varepsilon$-neighbourhood of a point $i$. We call $i$ a \emph{core} point
if $|N_\varepsilon(i)|\ge M$, i.e., there are at least $M$ objects%
\footnote{%
    In the original paper \cite{hdbscan_lnai}, its authors included
    the point $i$ itself in its own neighbourhood.
}
within radius $\varepsilon$ of $i$. Otherwise, if the $\varepsilon$-neighbourhood
of $i$ is not sufficiently ``dense'', it is labelled a \emph{noise} point%
\footnote{%
    The original DBSCAN \cite{dbscan} algorithm additionally distinguishes
    between noise and \emph{border points}, that is, non-core objects that are
    contained within $\varepsilon$-neighbourhoods of core points.  As we do not
    want to obscure the algorithm's definition, we refrain from making such
    a distinction and only note that they can always be added at the postprocessing stage.
    Such a simplification actually makes DBSCAN* even more similar to its
    predecessor, the $(M, r)$-clustering method proposed in 1973 by R.F. Ling \cite{predbscan}.
}.

Each DBSCAN* cluster ${C}\subseteq\mathcal{X}$ is a maximal set of \emph{core} points such that
every pair $i,j\in{C}$ is \emph{density-connected},
i.e., there exists a path $i=p_1,p_2,\dots,p_{l-1},p_l=j\in C$
such that $p_{k+1}\in N_\varepsilon(p_{k})$ for all relevant $k$.

\subsection{Mutual reachability distance and the DBSCAN* algorithm}

Finding a meaningful radius $\varepsilon$ without analysing the
distribution of the pairwise distances may be difficult.
A very elegant reformulation of DBSCAN* was additionally introduced
in \cite{hdbscan_lnai,hdbscan} to facilitate the tuning of this parameter.

Let $d(i, (m))=\| \mathbf{x}_i-\mathbf{x}_{e(i, (m))} \|$ be the distance between $\mathbf{x}_i$ and its
$m$-th nearest neighbour w.r.t.~$d$, denoted $e(i, (m))$.
For a fixed $M\ge 1$, let the corresponding \emph{$M$-core distance}
of the $i$-th point be given by $c_M(i)=d(i, (M))$.
We define the \emph{$M$-mutual reachability distance} as a symmetric version
of a discrepancy measure from \cite{optics}:
\begin{eqnarray}
d_M(i,j) &=& \max\{ d(i,j), c_M(i), c_M(j) \} \\
&=& \max\left\{ d(i,j), d(i, (M)), d(j, (M)) \right\}
\nonumber
\end{eqnarray}
for $i\neq j$ and $0$ otherwise.
It can be easily proven that for any $M$, $d_M$ on $\mathcal{X}$ is a metric,
i.e., in particular, it fulfils the triangle inequality.
For $M = 1$, $d_M$ on $\mathcal{X}$ coincides with $d$.
Otherwise, $d_M$ pulls certain point pairs away from each other,
especially close neighbours in sparsely populated subregions
(where the core distances $c_M$ tend to be high).

The following result connects the mutual reachability distances
with the aforementioned clustering algorithm.

\begin{proposition}
For a fixed $\varepsilon>0$ and a smoothing parameter $M\ge 1$,
DBSCAN* finds the connected components of the graph $G=(\mathcal{X}, E)$
consisting of edges $\{i,j\}$ such that $d_M(i,j)\le \varepsilon$,
and where all singletons are marked as noise points,
and all the remaining connected components are treated as standalone clusters
comprised of core points.
\end{proposition}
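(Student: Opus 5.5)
The plan is to verify two facts and then combine them. First, a point $i$ is core exactly when it is not isolated in $G$. Second, for core points $i,j$, the relation $d_M(i,j)\le\varepsilon$ coincides with $j\in N_\varepsilon(i)$. Since the $d(i,j)$ are pairwise distinct, there are no ties, and the $M$-th nearest neighbour $e(i,(M))$ is well defined.

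Step 1 characterises core points through core distances. By definition, $|N_\varepsilon(i)|\ge M$ holds iff at least $M$ points lie within distance $\varepsilon$ of $i$. This is equivalent to $c_M(i)=d(i,(M))\le\varepsilon$, because the $M$ nearest neighbours are exactly the $M$ smallest distances from $i$.

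Step 2 shows that the non-singleton vertices of $G$ are precisely the core points.
\begin{itemize}
\item Suppose $i$ has an edge $\{i,j\}$ in $G$. Then $c_M(i)\le d_M(i,j)\le\varepsilon$, so $i$ is core by Step 1.
\item Conversely, suppose $i$ is core. Take $j=e(i,(1))$, the nearest neighbour of $i$. The aim is $d_M(i,j)=\max\{d(i,j),c_M(i),c_M(j)\}\le\varepsilon$. We have $d(i,j)\le c_M(i)\le\varepsilon$, so it remains to bound $c_M(j)$.
\end{itemize}
The bound on $c_M(j)$ is the main obstacle, because $j$ need not be core a priori.

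Choosing $j=e(i,(1))$ is not enough. The natural fix is to take $j=e(i,(1))$ and use the triangle inequality: every neighbour $k\neq j$ of $i$ satisfies $d(j,k)\le d(j,i)+d(i,k)\le 2\varepsilon$. This gives only $c_M(j)\le 2\varepsilon$, which is too weak. So I would instead take $j$ to be a core point in $N_\varepsilon(i)$. If such a $j$ exists, the edge is in $G$. If no such $j$ exists, then $i$ has no edges in $G$: it is a core point whose neighbours are all non-core. In that case DBSCAN* makes $\{i\}$ a singleton cluster, whereas the proposition labels it noise.

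I therefore expect the statement to hold only up to this degenerate case. My plan is to argue that the proposition should be read as follows: isolated vertices of $G$ are reported as noise. These include all non-core points, by Step 2's forward direction, and possibly core points with no core neighbour, which DBSCAN* would output as one-point clusters. I would flag this as a remark, or adopt the convention that singleton clusters are treated as noise, as in HDBSCAN's minimum cluster size.

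Step 3 handles pairs of core points and the components. For core $i,j$, Step 1 gives $c_M(i),c_M(j)\le\varepsilon$. Hence $d_M(i,j)\le\varepsilon$ iff $d(i,j)\le\varepsilon$, which is iff $j\in N_\varepsilon(i)$. So the subgraph of $G$ induced on core points is exactly the $\varepsilon$-neighbourhood graph restricted to core points.

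Edges of $G$ only join core points, since each endpoint of an edge is core by Step 2's forward direction. Consequently the non-singleton components of $G$ consist only of core points, and paths within them are paths of consecutive $\varepsilon$-neighbours among core points. These components are therefore exactly the maximal density-connected sets of core points with at least two elements. This matches the DBSCAN* clusters, with singletons marked as noise, which concludes the argument.
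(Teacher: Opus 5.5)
Your argument is correct and follows the same line as the paper's sketch. Both show that non-core points have $c_M(i)>\varepsilon$ and are therefore isolated in $G$, that core points have $c_M(i)\le\varepsilon$, and then read off the components. Yours is more careful in two places. First, your Step 3 observation that $d_M(i,j)\le\varepsilon \iff d(i,j)\le\varepsilon$ for core $i,j$ is what actually identifies the components of $G$ with density-connected sets; the sketch leaves this implicit. Second, the sketch never shows that a core point is incident to an edge of $G$, and your suspicion that this can fail is justified.

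You can turn the suspicion into a proof that the caveat is necessary. Take $M=2$, $\varepsilon=1$, and three points on a line at $-0.9$, $0$, $0.95$, with pairwise distances $0.9$, $0.95$, $1.85$.
\begin{itemize}
\item The point at $0$ is core, since $c_2=0.95\le 1$.
\item The outer two points are not core, since each has $c_2=1.85>1$.
\item Hence every $d_2$ value equals $1.85>1$, so $G$ has no edges and the proposition labels all three points as noise.
\item DBSCAN* as defined in the paper instead returns $\{0\}$ as a one-point cluster.
\end{itemize}
So the statement holds verbatim only for $M=1$, where any $\varepsilon$-neighbour of a core point is itself core, or under the reading you propose. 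An equivalent fix is to declare a singleton $\{i\}$ of $G$ noise only when $c_M(i)>\varepsilon$. This is what the core-distance-weighted self-edges in the original HDBSCAN* construction accomplish. Including the explicit example would make your remark airtight.
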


\begin{proof}[Proof (sketch).]
If $c_M(i) > \varepsilon$,  %
then $|N_\varepsilon(i)|< M$ and hence $i$ is a noise point.
Also, it implies that $d_M(i,j)>\varepsilon$ for all
$j\neq i$ and thus it is an isolated vertex in $G$.
On the other hand, if $c_M(i) \le \varepsilon$, it is a core point.
By definition, $C$ is a connected component of $G$ if
it is the maximal subset of $\mathcal{X}$ such that
for all $i,j\in C$ there exists a path between them.
\end{proof}

\subsection{DBSCAN* from a computational perspective}

The DBSCAN* algorithm has nice statistical (finding level sets of an unknown
density function) as well as topological (persistent homology) underpinnings
as explained, e.g., in~\cite{fasthdbscan}.  In the current work, we are
interested in a computational (algorithmic) take on finding prominent clusters.
Notably, \cite{hdbscan_lnai,hdbscan} showed that DBSCAN* is equivalent to:
\begin{enumerate}[itemsep=0pt]
\item running the Single Linkage clustering algorithm on $\mathcal{X}$
with respect to the $M$-mutual reachability distance, $d_M$,

\item cutting the output dendrogram at some level $\varepsilon>0$, and

\item treating the resulting nontrivial branches as standalone clusters
whilst marking the remaining singletons\footnote{%
The original HDBSCAN* paper distinguishes between the cases $M=0$ and $M=1$
such that the former does not mark singletons as noise points,
but treats them as degenerate clusters. This way, if we assume $e(i, (0))=i$
and hence $d(i, (0))=0$, we can treat it as a generalisation of the standard
single linkage scheme.
}
as noise points.
\end{enumerate}
The proof that for a given $\varepsilon$, we get exactly the same results
as with the original formulation of DBSCAN* is a by-product
of Theorem~\ref{thm:dbscanmst} that we present below.
Most importantly, the first step is now independent of $\varepsilon$,
which greatly simplifies the whole algorithm.
Moreover, this formulation opens up space for many new offshoots thereof.
In particular, \cite{hdbscan} proposes the HDBSCAN (hierarchical DBSCAN*)
algorithm that prevents the formation of too small point groups,
as controlled by the \textit{min\_cluster\_size} parameter,
For that, its popular implementations enclosed in the \textbf{scikit-learn}
\cite{sklearn} or \textbf{hdbscan} \cite{hdbscanpkg} Python packages use
a robustified single linkage algorithm considered in
\cite{ChaudhuriDasgupta2010:robustsinglelinkage}.  They also implement
some heuristics to find the dendrogram cuts that lead to the ``most stable'' clusters.

\subsection{DBSCAN* and mutual reachability minimum spanning trees}

Nevertheless, even if all $d(i,j)$s are unique, it may happen that $d_M(i, (j))=d_M(i, (j'))$
for $j\neq j'$. Actually, this is frequently the case because of the way the
mutual reachability distances are defined: especially points with high $c_M$s
will have many of their near-neighbours equidistant from them.
Because of this ambiguity, we are often unable
to obtain clusterings of all desired cardinalities:
by varying $\varepsilon$, the number of clusters does not necessarily increase/decrease by 1.
In particular, neither of the aforementioned Python implementations allow us to
request a specific number of clusters.

Furthermore, that to compute the single linkage clustering it is sufficient
to consider a minimum spanning tree of the pairwise distance graph and merge
clusters in increasing order of MST edge weights (or remove edges in decreasing
order and consider the resulting connected components) has already been pointed
out in \cite{GowerRoss1969:singlelinkagemst}. In the case of mutual reachability
distances, due to ties, there can be many $M$-mutual reachability minimum
spanning trees, i.e., acyclic graphs on $\mathcal{X}$ with edges $\{i,j\}$
weighted by $d_M(i,j)$ minimising the total sum of distances
(an example is provided in the sequel).  Fortunately, neither the baseline
DBSCAN* algorithm nor HDBSCAN's robust single linkage is affected by such
an ambiguity, as established by the following result.

\begin{theorem}\label{thm:dbscanmst}
Let $G=(\mathcal{X}, E)$ be a complete  undirected graph
with edges $\{i,j\}$ weighted by $d_M(i,j)$, possibly tied.
For any $\varepsilon>0$, denote with $E|_\varepsilon=\{ \{i,j\}\in E: d_M(i,j)\le \varepsilon\}$.
Then the connected components of $G|_\varepsilon=(\mathcal{X},E|_\varepsilon)$
are the same as the connected components of
$T|_\varepsilon=(\mathcal{X},E'|_\varepsilon)$ for any minimum spanning tree
$T=(\mathcal{X},E')$ of $G$ and any $\varepsilon>0$.
\end{theorem}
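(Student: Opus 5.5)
Since $E'|_\varepsilon\subseteq E|_\varepsilon$, every connected component of $T|_\varepsilon$ lies inside a component of $G|_\varepsilon$. The whole proof therefore reduces to the reverse inclusion: whenever $i$ and $j$ are joined by an edge of $E|_\varepsilon$, they are also joined by a path in $T|_\varepsilon$. Connectivity is transitive, so applying this edge by edge along any path of $G|_\varepsilon$ shows that each component of $G|_\varepsilon$ is contained in a component of $T|_\varepsilon$. Together with the trivial inclusion, the two partitions of $\mathcal{X}$ coincide.

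To prove the key claim, fix an edge $\{i,j\}\in E|_\varepsilon$, so that $d_M(i,j)\le\varepsilon$. If $\{i,j\}\in E'$ we are done. Otherwise, $T$ is a spanning tree, so it contains a unique path $i=p_1,\dots,p_l=j$. I will use the \emph{cycle property} of minimum spanning trees, which is valid even with tied weights: for every edge $\{p_k,p_{k+1}\}$ on this path,
\[
d_M(p_k,p_{k+1})\le d_M(i,j).
\]
Suppose instead that some path edge $f=\{p_k,p_{k+1}\}$ had $d_M(f)>d_M(i,j)$. Removing $f$ from $T$ splits it into two subtrees, one containing $i$ and the other containing $j$, because $f$ lies on the unique $i$--$j$ path. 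Adding $\{i,j\}$ reconnects these subtrees into a spanning tree whose total weight is strictly smaller than that of $T$. This contradicts the minimality of $T$. Hence every edge on the path has weight at most $d_M(i,j)\le\varepsilon$, so the whole path lies in $E'|_\varepsilon$, and $i$ and $j$ are connected in $T|_\varepsilon$.

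The main point needing care is the exchange argument under ties. We only need the non-strict inequality $d_M(f)\le d_M(i,j)$, and the exchange yields a contradiction only when the inequality is strict, so ties do not harm the argument. The conclusion therefore holds for \emph{every} minimum spanning tree, not merely for one chosen by a particular tie-breaking rule. It holds for every $\varepsilon>0$ because $\varepsilon$ entered only through $d_M(i,j)\le\varepsilon$. As a corollary, combined with the preceding Proposition, cutting any mutual reachability MST at level $\varepsilon$ recovers exactly the DBSCAN* clusters, regardless of which tied MST is used.
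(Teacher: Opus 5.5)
Your proof is correct, but it takes a different route from the paper's sketch.

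The paper argues algorithmically. It runs Kruskal's algorithm on the edges in nondecreasing order of $d_M$ and stops after the last edge with $d_M(i,j)\le\varepsilon$. Each such edge is either added or skipped because its endpoints are already joined, so the partial forest has exactly the components of $G|_\varepsilon$. The order in which tied edges are processed changes the tree but not these components.

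You argue structurally instead. You combine the trivial inclusion $E'|_\varepsilon\subseteq E|_\varepsilon$ with the path-optimality (cycle) property of an arbitrary MST. You prove that property by a direct exchange argument, which, as you note, needs only a strict inequality to reach a contradiction.

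Your version applies verbatim to \emph{every} minimum spanning tree. The Kruskal argument, to cover ``any MST'', tacitly needs an additional fact: every MST of a graph with tied weights is produced by Kruskal's algorithm under some ordering of the ties. This is true (for instance, list the edges of $T$ first within each tie class), but it needs an exchange-type argument that the sketch omits.

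The paper's version, in turn, makes explicit that $T|_\varepsilon$ is exactly the union--find state of Kruskal's algorithm at threshold $\varepsilon$. That is the link to single linkage and dendrogram cutting which the surrounding text relies on.
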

\begin{proof}[Proof (sketch).] It suffices to study the workings of the Kruskal
\cite{Kruskal1956:mst} minimum spanning tree algorithm up to some point.
We consider the connected components
that arise when we browse through the edges in nondecreasing order
of weights and stop merging the disjoint sets at the last edge $\{i,j\}$
such that $d_M(i,j)\le\varepsilon$. Additionally, we note that the connected
components (contrary to the tree structure per se) are not affected
by the order of the consumption of the edges with tied weights: if an edge
is skipped, it means that there already was a path between $i$ and $j$
in the current tree (and we do not wish to form a cycle).
\end{proof}

\begin{figure}[t!]
\centering
\includegraphics[width=160mm]{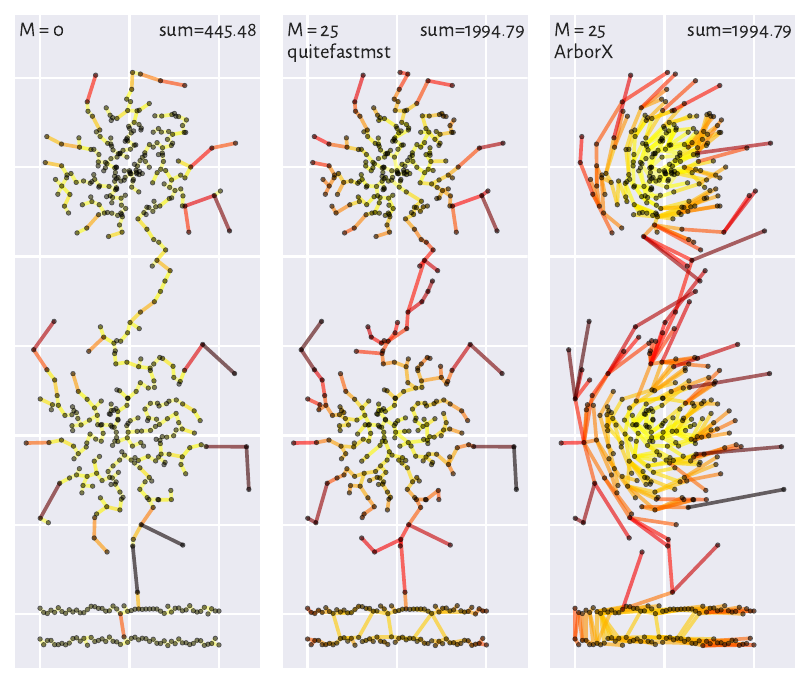}

\caption{\label{fig:example-mutreach-mst-chains}
Three copies of an example dataset with four clusters:
two horizontal segments and two Gaussian blobs connected by a point chain.
Additionally, its Euclidean minimum spanning tree (left) as well as
two 25-mutual reachability minimum spanning trees (centre, right) are depicted.
Edge colour intensities correspond to the underlying distances with the darkest
edge being the longest in each case.
By increasing the smoothing parameter $M$, we make the blobs better separable
but the close-lying segments become blended.
There are many more possible minimum spanning trees for $d_{25}$
and they can have very diverse topologies.
In the current paper, we adjust the mutual reachability distance
(Eq.~\eqref{eq:adjustedmutreach}; centre) so that forming edges between
point pairs with smaller original (Euclidean) distances is preferred.
}
\end{figure}

\afterpage{\clearpage}

\subsection{Addressing the ambiguity of mutual reachability minimum spanning trees}

Different $d_M$-minimum spanning trees can have very diverse
topologies; see Figure~\ref{fig:example-mutreach-mst-chains}.
Even though, by the result above, DBSCAN* and HDBSCAN are not affected by the
ambiguity,
if we wish to run other clustering algorithms that operate on MSTs alone,
this feature becomes very problematic.

In particular, \cite{GagolewskiETAL2025:cvimst} reviews many clustering
algorithms that yield a specific number of clusters $k$ being
the connected components of a forest stemming from the input spanning tree
with appropriate $k-1$ edges removed.
Given a precomputed MST, which takes between $\Omega(n\log n)$ and $O(n^2)$ time
depending on the dimensionality of the dataset, these algorithms can be
particularly effective as two natural greedy strategies can be applied:
\begin{itemize}
\item divisive: remove $k-1$ edges from the spanning tree one by one based on some decision rule/criterion,
\item agglomerative: extend the graph with $n$ isolated vertices
by adding $n-k$ edges from the tree one by one.
\end{itemize}
For example:
\begin{itemize}
\item Single linkage \cite{singlelinkage} adds $n-k$ shortest edges to the tree,
or, equivalently, removes $k-1$ longest ones \cite{GowerRoss1969:singlelinkagemst},
\item the ITM algorithm \cite{itm} iteratively removes edges from the tree
that maximise an information-theoretic criterion based on entropy,
\item Genie \cite{GagolewskiETAL2016:genie} adds shortest edges one by one
provided that the Gini index of cluster sizes is below a given threshold;
otherwise, i.e., if the inequality of the cluster size distribution becomes too
high, the smallest cluster is merged with its nearest neighbour from the tree.
\end{itemize}

The mutual reachability distance is potentially attractive because
it smoothens out a noisy point distribution by making points in lower density
regions (on cluster boundaries, between clusters) more isolated,
as they tend to have larger $c_M$.
Being able to unambiguously define minimum spanning trees
with respect to $d_M$ it is thus attractive, as it will enable us
to run the algorithms normally applied on plain Euclidean MSTs, and thus
potentially making them more robust (which will be tested below).

\bigskip
Let us look at the building of the spanning tree from the perspective
of Borůvka's algorithm which, iteratively, for any given connected component $C$
adds to the tree the shortest edge incident on a vertex in $C$ and on a vertex
not in $C$.
An ambiguity arises when we have multiple edges of the same weight, which
can happen if mutual reachability distances are clamped by some core distance.
For instance, let $i\in C$ and $j,j'\not\in C$ be such that
$d_M(i,j)=d_M(i,j')=c_M(i)$.
One possible approach would be to select $j$ if $c_M(j)<c_M(j')$
as $1/c_M(j)$ can be though of as a density estimate around $j$.
This way, points of lower core distances would receive more neighbours
in the tree, leading to the formation of more hubs.
Unfortunately, there can be many points of the same core distances,
so this method does not resolve the ties unambiguously.

Instead, we can prefer $j$ over $j'$ if the original distance
is $d(i,j)<d(i,j')$. This leads to trees that are more similar to
their Euclidean counterparts (compare the centre and the left
of Figure~\ref{fig:example-mutreach-mst-chains}).
Therefore, our new adjusted mutual reachability distance is
given for $i\neq j$ by:
\begin{equation}\label{eq:adjustedmutreach}
d_M'(i, j)=\max\{d(i, j), c_M(i), c_M(j)\}+\delta d(i, j),
\end{equation}
where $\delta>0$ is an infinitesimal constant.
It can easily be shown that if $\delta<\min_{i\neq j} d(i, j)$, then
$d_M'$ is a metric on $\mathcal{X}$.

Algorithmically, we can consider the original $d_M$ but with a binary less-than
comparer implemented in such a way that the closer pair sorts
before the farther one if ties are detected.
Our \textbf{quitefastmst}\footnote{See \url{https://quitefastmst.gagolewski.com/}.}
package for Python and R implements this
approach exactly in the case of \textit{algorithm='brute'}, which relies on a parallelised Prim's
$O(n^2)$ method \cite{prim,Olson1995:parallelhierclust};
see centre subfigure of Figure~\ref{fig:example-mutreach-mst-chains}
(average path length$=55.25$).

Other implementations of mutual reachability spanning trees, namely
\textbf{hdbscan}\footnote{See \url{https://hdbscan.readthedocs.io/}.} v0.8.42 \cite{hdbscanpkg} (average path length$=55.75$),
\textbf{fast\_hdbscan}\footnote{See \url{https://github.com/TutteInstitute/fast_hdbscan/}.} v0.3.2 \cite{fasthdbscan} ($=55.20$),
\textbf{MemoGFK}\footnote{See \url{https://github.com/wangyiqiu/hdbscan}.} commit=e120dcf \cite{wangyiqiu_hdbscan}  ($=54.20$)
and \textbf{quitefastmst} v0.9.1 with K-d trees ($=55.42$),
seem to yield similar trees, mainly because they reuse the $M$ nearest
neighbours needed to determine the core distances in the order
from the closest to the farthest.
On the other hand, \textbf{ArborX}\footnote{See \url{https://github.com/arborx/ArborX}.} commit=463fd9 \cite{arborx,arborx_emst}
yields a tree of a very different topology; see the
right subfigure of Figure~\ref{fig:example-mutreach-mst-chains}
(average path length$=50.51$).

The topological properties of different mutual reachability
minimum spanning trees will be studied in a forthcoming paper
as this issue certainly deserves a more focused treatment.
Here we will rely on the natural adjusted distance given by Eq.~\eqref{eq:adjustedmutreach}.
In Section~\ref{sec:experiments}, we will be interested
in studying the effect of choosing the number of neighbours $M$
on the performance of two algorithms: Genie and Lumbermark,
the latter of which we introduce below.

\section{Lumbermark}\label{sec:lumbermark}

We propose the following divisive algorithm that fits the MST-based clustering framework
discussed in \cite{GagolewskiETAL2025:cvimst}.
It operates under the key assumption that
edges of a spanning tree that connect points in the same cluster
are short whilst those connecting different clusters are long.
However, outliers can also be present; therefore, small tree limbs should
not be considered standalone clusters.

\begin{algo}\label{algo:lumbermark}
\textbf{Lumbermark}$(\mathcal{X}, k; M, f)$:

Input: An $M$-mutual reachability minimum spanning tree $T$ of $\mathcal{X}$,
the requested number of clusters (\textit{n\_clusters}) $k$.

Tunable parameters: $M$, (\textit{min\_cluster\_factor}) $f\in[0,1]$.

Output: a $k$-clustering of $\mathcal{X}$.

\begin{enumerate}
\item[1.] Let $T'$ be a version of $T$ with all the leaves removed.

\item[2.] Let (\textit{min\_cluster\_size}) $s = f \frac{|T'|}{k}$.

\item[3.] Let (\textit{cut\_edges}) $C=\emptyset$.

\item[4.] For each edge $e$ in $T'$ in decreasing order of weights:

\begin{enumerate}
    \item[4.1.] If all connected components in $T'\setminus (C\cup \{e\})$
    are of size at least $s$, let $C = C\cup \{e\}$.
    \item[4.2.] If $|C| = k-1$, return the $k$ connected
    components of $T\setminus C$ as a result.

    \hfill $\square$
\end{enumerate}
\end{enumerate}
\end{algo}

The algorithm ``chops off'' tree limbs of considerable sizes
provided that they are joined by edges that ``stick out''.
Thanks to the nice topological properties of mutual reachability minimum
spanning trees, the algorithm is able to extract clusters of quite different densities
and shapes provided that they are well separated from one another.
If \textit{min\_cluster\_factor} $f=1$, we request all
clusters to be of the same sizes. By using smaller factors,
we allow for some inbalacedness in cluster sizes. From the experiments below,
we recommend setting $f=0.25$.

The removal of leaves makes the algorithm more robust by
not taking points at the cluster borders nor obvious outliers
into account. This step can be made optional, in which case
if the spanning tree is an Euclidean MST and $f=0$, the above algorithm
reduces to the single linkage algorithm.
Otherwise, we can think of it as a robustified version of single linkage,
where points lying in lower-density regions are being pulled away from each other
(due to the reliance on the mutual reachability distance) and where
we prevent the formation of too small clusters
(thanks to the use of \textit{min\_cluster\_size}).

\textit{min\_cluster\_size} depends both on the sample size and
the number of clusters requested, so unlike in HDBSCAN and the like,
the algorithm's performance does not degrade as $n$ increases.
Nevertheless, we notice that running
the algorithm with different \textit{n\_clusters} parameter might yield
different partitions; the algorithm is merely quasi-hierarchical:
for a given $k$, the clusterings of cardinalities $k, k-1, \dots, 1$
are properly nested, but they may depend on $k$.

\subsection{Reference implementation}

The reference C++ implementation of the algorithm is included
in the open-source \textbf{lumbermark} package for Python and R%
\footnote{See \url{https://lumbermark.gagolewski.com/}.}.
The Python version, available via the PyPI repository,
implements a \textbf{scikit-learn}-compatible API \cite{sklearn},
with typical \textit{fit()} and \textit{fit\_predict()} methods.
Both versions rely on \textbf{quitefastmst} to compute the mutual reachability
minimum spanning trees, which uses an optimised single-tree Borůvka algorithm
\cite{Boruvka1926:mst1} backed up by K-d trees
\cite{kdtrees,kdtreesmidrange,kdtreesoptim} to speed up parallelised near-neighbour lookups
in low-dimensional spaces.

Given an $M$-mutual reachability minimum spanning tree with edges
sorted increasingly (as we mentioned, run-time between $\Omega(n\log n)$ and $O(n^2)$), the time complexity
of Lumbermark is $O(nk)$. This is because in each iteration,
using a simple recursive depth-first search, we need to relabel only the points
in the two newly created connected components, and for each edge therein
determine the total number of vertices on either side.
The memory use is $O(n)$.

We note that if $s$ is too large relative to the size of the dataset,
it might theoretically happen that the algorithm will fail to identify
exactly $k$ clusters. In such a case, the procedure should be restarted
with a smaller \textit{min\_cluster\_factor}. Luckily, this can be performed
without recomputing the MST.

\begin{table}[bh!]
\caption{\label{tab:runtimes} Run times in seconds (6 threads), medians of 5: Detecting $k$ clusters in random datasets with $n$ points in $\mathbb{R}^d$.}

\centering
\small
\begin{tabularx}{1.0\linewidth}{rrrR{3.2em}R{3.2em}R{3.2em}R{3.2em}R{3.2em}R{3.2em}R{3.2em}R{3.2em}R{3.2em}}
\toprule
    &     &     & \multicolumn{2}{c}{\centering Lumbermark} &  \multicolumn{2}{c}{K-Means} & Gauss- & \multicolumn{2}{c}{(*) HDBSCAN} & & Spec-\\
$d$ & $k$ & $n$ &  M=1 & M=10 & ++ & random & Mix & fast & std & Ward & tral \\
\midrule
\multirow[t]{6}{*}{2} & \multirow[t]{3}{*}{2} & 10{,}000 & 0.01 & 0.01 & 0.00 & 0.02 & 0.02 & 0.05 & 0.15 & 1.37 & 11.86 \\
 &  & 100{,}000 & 0.07 & 0.09 & 0.02 & 0.09 & 0.12 & 0.66 & 2.27 &     &     \\
 &  & 1{,}000{,}000 & 1.23 & 1.33 & 0.16 & 0.59 & 1.04 & 21.18 &     &     &     \\
\midrule
 & \multirow[t]{3}{*}{10} & 10{,}000 & 0.01 & 0.01 & 0.01 & 0.04 & 0.05 &  &   & 1.36 & 12.27 \\
 &  & 100{,}000 & 0.10 & 0.11 & 0.05 & 0.20 & 0.69 &  & &     &     \\
 &  & 1{,}000{,}000 & 2.37 & 2.15 & 0.51 & 1.90 & 6.24 & &     &     &     \\
\midrule
\multirow[t]{6}{*}{5} & \multirow[t]{3}{*}{2} & 10{,}000 & 0.02 & 0.02 & 0.00 & 0.03 & 0.02 & 0.06 & 0.44 & 1.56 & 13.64 \\
 &  & 100{,}000 & 0.16 & 0.21 & 0.03 & 0.12 & 0.19 & 0.84 & 8.04 &     &     \\
 &  & 1{,}000{,}000 & 2.22 & 2.63 & 0.21 & 0.99 & 1.41 & 26.88 &     &     &     \\
\midrule
 & \multirow[t]{3}{*}{10} & 10{,}000 & 0.02 & 0.02 & 0.02 & 0.09 & 0.24 & & & 1.57 & 13.43 \\
 &  & 100{,}000 & 0.20 & 0.23 & 0.12 & 0.46 & 5.40 & & &     &     \\
    &  & 1{,}000{,}000 & 3.64 & 3.54 & 0.95 & 6.18 & 36.42 &   &     &     &     \\
\midrule
\multirow[t]{6}{*}{10} & \multirow[t]{3}{*}{2} & 10{,}000 & 0.09 & 0.09 & 0.01 & 0.03 & 0.03 & 0.23 & 1.26 & 1.72 & 13.03 \\
 &  & 100{,}000 & 1.45 & 1.58 & 0.03 & 0.12 & 0.21 & 4.00 & 71.57 &     &     \\
 &  & 1{,}000{,}000 & 43.85 & 52.98 & 0.32 & 1.70 & 2.14 & 254.76 &     &     &     \\
\midrule
 & \multirow[t]{3}{*}{10} & 10{,}000 & 0.09 & 0.09 & 0.02 & 0.15 & 0.76 && & 1.73 & 13.24 \\
 &  & 100{,}000 & 1.49 & 1.60 & 0.18 & 1.35 & 6.86 & & &     &     \\
 &  & 1{,}000{,}000 & 44.75 & 53.60 & 1.38 & 15.60 & 95.03 & &     &     &     \\
\bottomrule
\end{tabularx}

\flushright
(*) --- does not allow requesting a specific number of clusters $k$
\end{table}

\subsection{Example run times}

Table~\ref{tab:runtimes} gives example run times of Lumbermark
with $M=1$ and $M=10$. For the sake of comparison, also the following
algorithms are reported (using default parameters unless stated otherwise):
``standard'' (\textbf{hdbscan} v0.8.42)
and ``fast'' (\textbf{fast\_hdbscan} v0.3.2) HDBSCAN,
as well as K-Means (\textit{init='k-means++'} and \textit{init='random'} with 10 restarts),  %
EM for Gaussian mixtures, Ward linkage, and spectral clustering
(\textbf{scikit-learn} v1.8.0 for Python).
Some run times are not reported because the computations took significantly
more time than in other cases.

The results were obtained on a mid-range laptop running OpenSUSE Tumbleweed GNU/Linux 6.18.2
with Intel Core i7-1335U CPU (10 cores) and 32 GiB of RAM. Where applicable,
six threads were utilised.
For each $n$ (number of points), $d$ (dimensionality), and $k$ (cluster count),
five datasets were generated and the median time (in seconds) is reported.
Data features were sampled independently from a uniform distribution on
the unit interval, which is a particularly difficult case for the distance-based
methods because of the curse of dimensionality, clearly visible for higher
values of dataset dimensionality, $d$.

The run times of Lumbermark, standard and fast HDBSCAN are mostly affected
by the cost of computing the MST, which is the lowest in the former case,
where the \textbf{quitefastmst} package is utilised.  The other two packages
could benefit from relying on this faster algorithm.  However, according
to \cite{hdbscan}, the cost of HDBSCAN is still up to $O(n^2)$ because
of the applied relabelling.

Overall, Lumbermark can be considered a fast algorithm
for datasets of low-medium intrinsic dimensionality.
Nevertheless, for higher-dimensional datasets, approximate MSTs could
be considered to speed up the computations.

\begin{table}[hbt!]
\caption{\label{tab:results-compare-all}
Aggregated adjusted Rand indices for different algorithms.
}

\small
\centering\footnotesize
\begin{tabular}{lrrrrrrrr}
\toprule
  & $<.8$ & $\ge .95$ & Med & Mean & $<.8$ & $\ge .95$ & Med & Mean \\
\midrule
 & \multicolumn{4}{c}{Third-party labels only (25)} & \multicolumn{4}{c}{All labels (61 instances)} \\
\midrule
Lumbermark\_f0.25\_M5                  &   4  &   15 & 0.98 & 0.91   &      11   &  41 & 1.00 & 0.89 \\
Lumbermark\_f0.25\_M1                  &   6  &   12 & 0.94 & 0.89   &      14   &  37 & 1.00 & 0.88 \\
Genie\_G0.3\_M3                        &   7  &   12 & 0.95 & 0.86   &      16   &  38 & 0.99 & 0.87 \\
Genie\_{}G0.3\_{}M1 & \ 7 & \ 12 & \ 0.94 & \ 0.85 & \ 18 & \ 35 & \ 0.99 & \ 0.85 \\
Spectral & \ 8 & \ 13 & \ 0.96 & \ 0.83 & 30 & 24 & 0.86 & 0.69 \\
Gaussian Mixture & \ 9 & 10 & \ 0.94 & 0.75 & 28 & 22 & 0.86 & 0.67 \\
ITM & 16 & 7 & 0.77 & 0.73 & 37 & 18 & 0.75 & 0.73 \\
K-means & 12 & 9 & 0.82 & 0.71 & 38 & 17 & 0.61 & 0.58 \\
Birch & 11 & 8 & 0.83 & 0.70 & 36 & 16 & 0.62 & 0.58 \\
Adaptive Density Peaks & 13 & 8 & 0.76 & 0.70 & 41 & 12 & 0.59 & 0.56 \\
Average Linkage & 11 & 8 & 0.91 & 0.68 & 39 & 15 & 0.53 & 0.54 \\
Ward Linkage & 13 & 6 & 0.72 & 0.68 & 38 & 13 & 0.62 & 0.57 \\
Complete Linkage & 15 & 6 & 0.78 & 0.63 & 43 & 10 & 0.44 & 0.52 \\
Single Linkage & 16 & 8 & 0.44 & 0.48 & 36 & 23 & 0.47 & 0.50 \\
\bottomrule
\end{tabular}

\end{table}

\section{Experiments}\label{sec:experiments}

\subsection{Comparison against 61 datasets with reference partitions}

The setting of the first experiment is identical to the one from \cite{GagolewskiETAL2025:cvimst}.
Namely, 61 datasets from different sources such as
\cite{graves,chameleon,JainLaw2005:dilemma,FrantiVirmajoki2006:ssets,ThrunUltsch2020:fcps}
available through the benchmark dataset repository \cite{clustering_benchmarks} v1.1.0
with $n<10{,}000$ and $d\le 3$ were considered. All datasets come with reference labels:
in particular, 25 of them have labels by their original authors, which we consider
separately to avoid bias.
Evaluating clustering algorithms is done on low-dimensional data so that the
labels can be inspected visually. The underlying assumption is that an
algorithm that performs well in 2- or 3-dimensions might be considered in
more complex tasks. On the other hand, if a method does not handle even
such simple scenarios well, it is quite unlikely to be trustworthy in high-dimensional
cases.  To measure the similarity between an algorithm's output and the
reference partition, the adjusted Rand (AR) index \cite{HubertArabie1985:partitionscomp} is used.
In case of datasets with more than one reference labelling, the highest AR
is reported.

Table~\ref{tab:results-compare-all} reports the numbers
of cases with AR $<0.8$ and AR $\ge 0.95$ (``bad'' and ``good'' matches)
as well as median and mean AR across the subset of 25 and entirety of 61 datasets.
The Lumbermark algorithm with the smoothing parameter of $M=5$
and \textit{min\_cluster\_factor} of $f=0.25$ yields
the best results across all the categories.

Moreover, in the case of clustering with respect to the standard Euclidean
distance $M= 1$, Lumbermark with $f=0.25$ offers a slight improvement
over the more complicated Genie algorithm, which was the winner of the previous ranking
\cite{GagolewskiETAL2025:cvimst}.

\subsection{Choice of the smoothing factor $M$}

Let us study the effects of choosing the underlying parameters of the Lumbermark
algorithm in greater detail.
For that we will rely on a subset of 47 datasets from \cite{clustering_benchmarks}
which come with unambiguous reference partitions and for which at least
one algorithm from the previous study yielded AR > 0.5 (e.g., clusters
are not too overlapping). These are:
\textit{FCPS/atom, chainlink, engytime, hepta, lsun, target (labels1), tetra, twodiamonds, wingnut};
\textit{Graves/dense, line, parabolic, ring\_noisy, zigzag\_noisy};
\textit{Other/chameleon\_t4\_8k, chameleon\_t5\_8k, chameleon\_t8\_8k, hdbscan, iris, square};
\textit{SIPU/a1, a2, a3, aggregation, compound (labels2), d31, flame (labels1), jain, pathbased, r15, s1, s2, spiral, unbalance};
\textit{WUT/circles, isolation, mk1, mk2, mk3, mk4, smile, stripes, trapped\_lovers, windows, x1, z2, z3}.

Figure~\ref{fig:heatmap_Lumbermark_AR} gives the proportions of
datasets yielding AR $\ge 0.95$ and $<0.8$ as well as the average AR.
Datasets with balanced and imbalanced reference cluster sizes
(Gini index > 0.2) are reported separately.

We see that small smoothing factors ($M\le 10$) tend to work best.
Overall, the benefit of introducing the mutual reachability distance
compared to the raw Euclidean metric ($M= 1$) is rather modest.
Balanced datasets are handled well by a wide range
of \textit{min\_cluster\_factor}s. Obviously, for imbalanced cluster sizes
we need to decrease this parameter.

We were also interested in applying the Genie algorithm
on mutual reachability MSTs; see Figure~\ref{fig:heatmap_Genie_AR}.
Here, the effect of introducing the $M$ parameter is much smaller
(unless the smoothing factor is too high).
In the previous subsection's setting, the best parameter combination was $M=3$ and $G=0.3$; see Table~\ref{tab:results-compare-all}.
We also note that Genie is much more sensitive to the choice of the underlying
parameter $G$ (the Gini index threshold).  The Lumbermark method
is more robust in that regard.

\section{Conclusion}\label{sec:conclusion}

We have introduced a new divisive clustering algorithm based
on mutual reachability spanning trees named Lumbermark.
Its appealing simplicity comes with fast run times (in datasets of
low intrinsic dimensionality) and high-quality outputs.

In the course of the algorithm's development, we had to address
the fact that mutual reachability spanning trees might be ambiguous.
We have discovered that the effect of the underlying smoothing
parameter $M$ is modest at most.  Setting $M$ too large is often
counterproductive.

Lumbermark is robust in the presence of outliers and clusters of imbalanced
sizes or different densities.  It provides an alternative
to HDBSCAN in the case where the number of clusters is known in advance.
Still, as a quasi-hierarchical algorithm, it produces usable nested partitions
of lower cardinalities too.

Future work on the algorithm will involve the evaluation of the use of
approximate minimum spanning trees. Moreover, we will verify its usefulness
in outlier and noise point detection in cases where datasets feature
clusters of different densities.

\paragraph{Data and code availability.}
Benchmark data \cite{clustering_benchmarks} can be fetched from {\url{https://clustering-benchmarks.gagolewski.com/}} and
the computed partitions from {\url{https://github.com/gagolews/clustering-results-v1}}.

The \textbf{lumbermark} package for Python and R is distributed under the
GNU AGPL v3 license and is available via the PyPI and CRAN repositories.  The project homepage
is located at \url{https://lumbermark.gagolewski.com/}.

\paragraph{No conflict of interest.}
The author certifies that he has no affiliations with or involvement in any
organisation or entity with any financial interest or non-financial interest
in the subject matter or materials discussed in this manuscript.

\paragraph{No use of generative AI.}
The author did not rely on any generative AI technology in the writing
of accompanying code nor in the preparation of this manuscript.

\begin{figure}[p!]
\centering
\includegraphics[width=160mm]{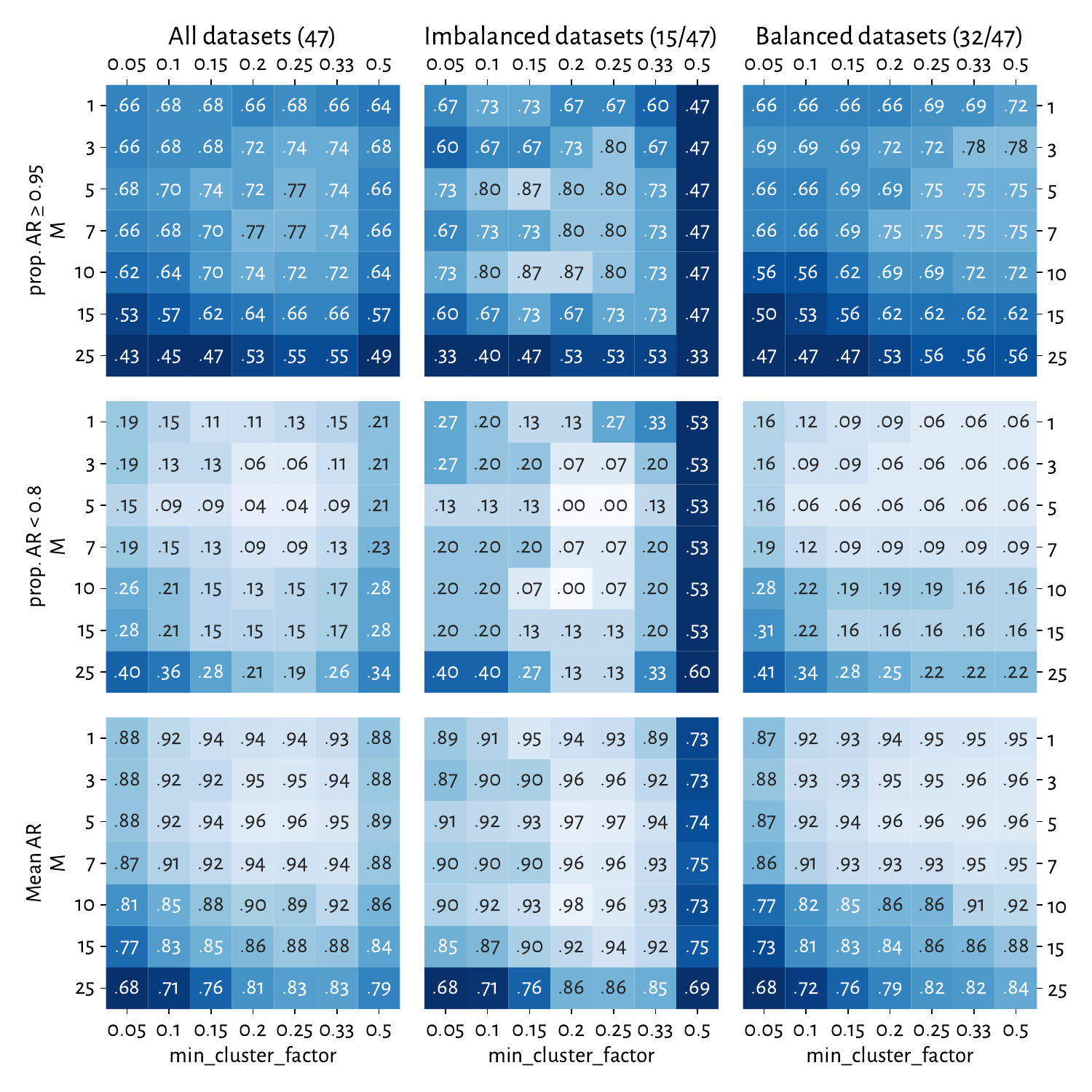}

\caption{\label{fig:heatmap_Lumbermark_AR}
Proportion of datasets with AR $\ge 0.95$ and $<0.8$ and the average AR
as a function of \textit{min\_cluster\_factor} $f$ and smoothing parameter $M$
in the case of the Lumbermark algorithm on 47 datasets.
Datasets with imbalanced and balanced reference cluster counts
are studied separately.
}
\end{figure}

\afterpage{\clearpage}

\begin{figure}[p!]
\centering
\includegraphics[width=160mm]{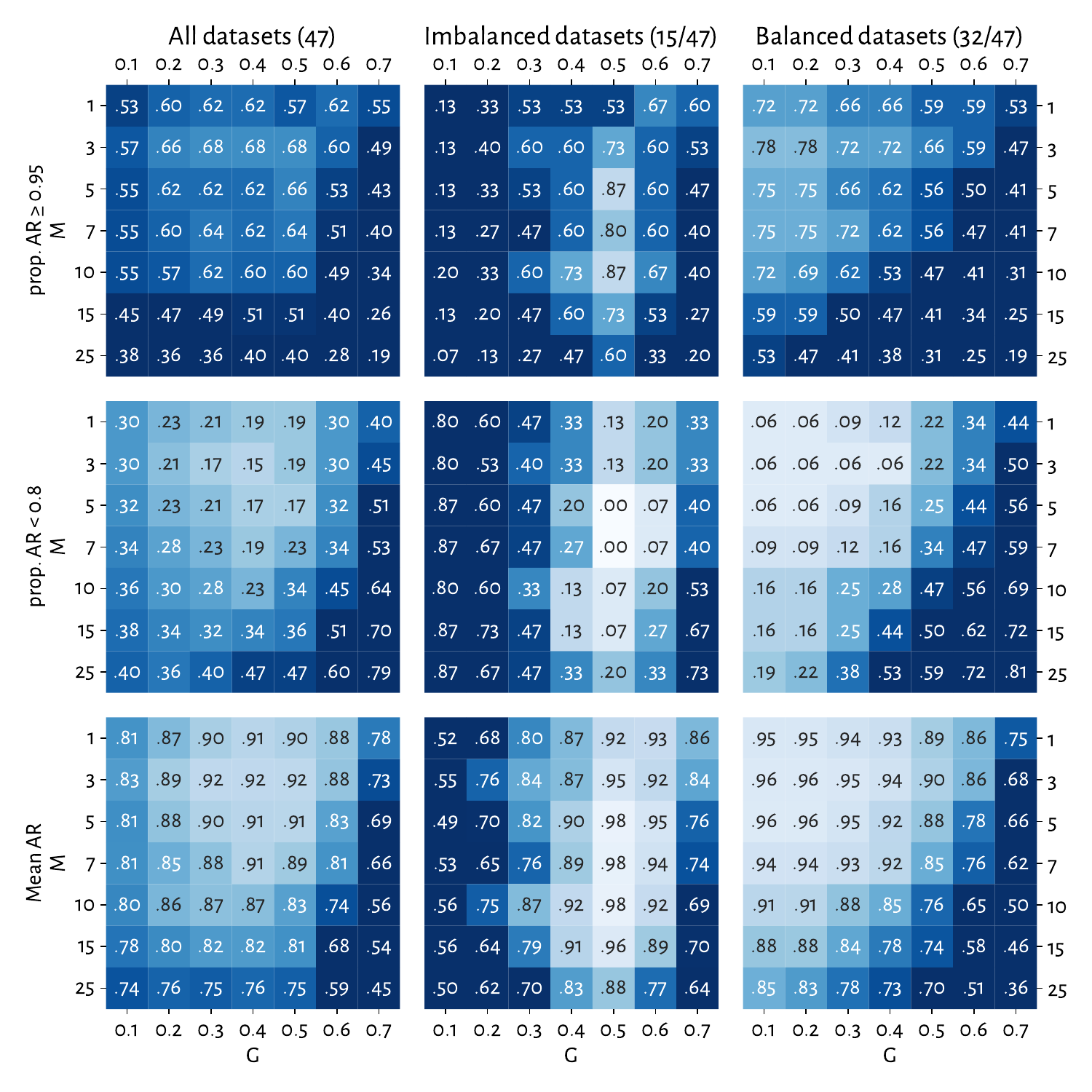}

\caption{\label{fig:heatmap_Genie_AR}
Proportion of datasets with AR $\ge 0.95$ and $<0.8$ and the average AR
as a function of the Gini index threshold $G$ and smoothing parameter $M$
in the case of the Genie algorithm on 47 datasets.
Datasets with imbalanced and balanced reference cluster counts
are studied separately.
}
\end{figure}

\afterpage{\clearpage}

\end{document}